\documentclass{article}

\usepackage{arxiv}

\usepackage[utf8]{inputenc} % allow utf-8 input
\usepackage[T1]{fontenc}    % use 8-bit T1 fonts
\usepackage{hyperref}       % hyperlinks (must be loaded for cross-referencing)
\hypersetup{
    colorlinks=true,   % Removes boxes and colors the text instead
    linkcolor=blue,    % Color for internal links (equations, figures, etc.)
    filecolor=blue,    % Color for local file links
    urlcolor=blue,     % Color for linked URLs
    citecolor=blue     % Color for bibliography citations
}
\usepackage{url}            % simple URL typesetting
\usepackage{booktabs}       % professional-quality tables
\usepackage{amsfonts}       % blackboard math symbols
\usepackage{nicefrac}       % compact symbols for 1/2, etc.
\usepackage{microtype}      % microtypography
\usepackage{graphicx}
\usepackage[numbers,sort&compress]{natbib}
\graphicspath{ {./images/} }
\usepackage{amsmath}
\usepackage{amsthm}
\usepackage{colortbl}
\usepackage{xcolor}
\usepackage{multirow}

\definecolor{first}{rgb}{1.0, 0.85, 0.85}
\definecolor{second}{rgb}{0.85, 1.0, 0.85}
\definecolor{third}{rgb}{0.85, 0.9, 1.0}  

\newtheorem{theorem}{Theorem}
\newtheorem{lemma}{Lemma}
\newtheorem{definition}{Definition}

\title{Beyond the Laplacian: Doubly Stochastic Matrices for Graph Neural Networks}

\renewcommand{\today}{}

\author{
 Zhaobo HU \\
 SAMOVAR, T\'el\'ecom SudParis \\
 Institut Polytechnique de Paris\\
 \texttt{zhaobo.hu@telecom-sudparis.eu} \\
 \And
 Vincent Gauthier \\
 SAMOVAR, T\'el\'ecom SudParis \\
 Institut Polytechnique de Paris\\
 \texttt{vincent.gauthier@telecom-sudparis.eu} \\
 \And
 Mehdi Naima \\
 CNRS -- LIP6\\
 Sorbonne Université\\
 \texttt{mehdi.naima@lip6.com} \\
}

\begin{document}
\maketitle

% \begin{abstract}
% Graph Neural Networks (GNNs) typically rely on the standard Laplacian or adjacency matrix for message passing. This paper proposes replacing the traditional Laplacian with a Doubly Stochastic graph Matrix (DSM) derived from the inverse of the modified Laplacian. The DSM natively captures continuous topological proximity and acts as a strict local centrality indicator. To circumvent the computationally prohibitive $O(n^3)$ exact matrix inversion, we introduce DsmNet, which employs an efficient truncated Neumann series approximation. Furthermore, because naive truncation results in probability mass leakage, we develop DsmNet-compensate, incorporating a Residual Mass Compensation mechanism that analytically folds residual mass back into self-loops, strictly preserving row-stochasticity. Theoretical analysis and extensive empirical evaluations demonstrate that our decoupled architectures scale in $O(K|E|)$ time. The proposed models achieve highly competitive performance across homophilic datasets and serve as effective structural encodings in generalized Graph Transformers, while we also rigorously analyze their performance boundaries on heterophilic graphs.
% \end{abstract}

\begin{abstract}
Graph Neural Networks (GNNs) conventionally rely on standard Laplacian or adjacency matrices for structural message passing. In this work, we substitute the traditional Laplacian with a Doubly Stochastic graph Matrix (DSM), derived from the inverse of the modified Laplacian, to naturally encode continuous multi-hop proximity and strict local centrality. To overcome the intractable $O(n^3)$ complexity of exact matrix inversion, we first utilize a truncated Neumann series to scalably approximate the DSM, which serves as the foundation for our proposed DsmNet. Furthermore, because algebraic truncation inherently causes probability mass leakage, we introduce DsmNet-compensate. This variant features a mathematically rigorous Residual Mass Compensation mechanism that analytically re-injects the truncated tail mass into self-loops, strictly restoring row-stochasticity and structural dominance. Extensive theoretical and empirical analyses demonstrate that our decoupled architectures operate efficiently in $O(K|E|)$ time and effectively mitigate over-smoothing by bounding Dirichlet energy decay, providing robust empirical validation on homophilic benchmarks. Finally, we establish the theoretical boundaries of the DSM on heterophilic topologies and demonstrate its versatility as a continuous structural encoding for Graph Transformers.
\end{abstract}

% ----------------------------------------------------------------------
% Section 1: Introduction
% ----------------------------------------------------------------------
\section{Introduction}
\label{sec:intro}
Graph Neural Networks (GNNs) have emerged as the primary paradigm for representation learning on non-Euclidean data \citep{kipf2017semi, hamilton2017inductive, velickovic2018graph}. The core mechanism of these models involves iteratively aggregating feature information from local node neighborhoods using structural operators based on the graph topology. While highly effective, traditional discrete local operators often struggle to continuously quantify multi-hop topological dependencies without uniformly mixing node features. To systematically capture more complex structural relationships, we propose replacing the standard Laplacian with the Doubly Stochastic graph Matrix (DSM). Derived from the inverse of the modified Laplacian ($\tilde{L} = I + L$), the DSM offers mathematically superior properties for information diffusion \citep{merris1997doubly, andrade2025doubly}. Unlike standard matrices, the DSM provides a continuous measure of topological proximity, capturing multi-hop relationships while naturally decaying with distance. Furthermore, its diagonal entries serve as a robust geometric indicator of node centrality, embedding critical structural priors directly into the diffusion operator. Despite these advantages, computing the exact DSM requires a dense matrix inversion with $O(n^3)$ complexity, rendering it intractable for large-scale graphs. To overcome this, we decouple the neural transformation from the topological diffusion and introduce DsmNet, which approximates the DSM using an efficient truncated Neumann series. However, truncating an infinite series inherently disrupts the closed algebraic system, resulting in probability mass leakage and the loss of row-stochasticity. To correct this structural bias, we further propose DsmNet-compensate, which features a mathematically rigorous Residual Mass Compensation mechanism. This mechanism analytically re-injects the truncated tail mass back into the ego-node's self-loop, strictly preserving mass conservation and bounding the receptive field without sacrificing topological fidelity. The main contributions of this paper are summarized as follows:
\begin{itemize}
    \item We introduce the Doubly Stochastic graph Matrix (DSM) as a replacement for the standard Laplacian in GNNs, demonstrating its theoretical advantages in modeling continuous proximity and local centrality.
    \item We formulate DsmNet, utilizing a scalable $O(K|E|)$ truncated Neumann series approximation for the DSM.
    \item We develop DsmNet-compensate, introducing a novel Residual Mass Compensation mechanism that strictly restores row-stochasticity and structural dominance.
    \item We conduct extensive empirical evaluations across diverse graph topologies. 
\end{itemize}

% ----------------------------------------------------------------------
% Section 2: Related Works
% ----------------------------------------------------------------------
\section{Related Works}
\label{sec:related_works}

\subsection{Graph Neural Networks}
Graph Neural Networks operate by passing messages along the edges of a graph, enabling the learning of node and graph-level representations. The evolution of these networks can be broadly categorized into spectral and spatial approaches. Spectral methods formulate graph convolutions utilizing graph signal processing, applying filters defined in the graph Fourier domain. Spatial methods directly aggregate neighborhood features, introducing inductive capabilities through sampling and dynamic weight assignment via attention mechanisms. While early architectures tightly couple feature transformation with neighborhood aggregation, later paradigms explore decoupled approaches to separate these operations. This decoupling expands the effective receptive field without proportionally increasing model parameters or causing severe gradient vanishing. Our method aligns with this decoupled paradigm, replacing heuristic propagation matrices with a mathematically derived, mass-conserving inverse Laplacian approximation.

\subsection{Doubly Stochastic Matrix and Structural Optimization}
The mathematical foundations of the Doubly Stochastic graph Matrix originate from algebraic graph theory. The properties of the modified Laplacian inverse ($\tilde{L}^{-1}$) and its capacity to measure structural resistance and node centrality have been rigorously analyzed in matrix theory \cite{merris1997doubly, andrade2025doubly}. These studies mathematically prove that the diagonal entries of the DSM are strictly dominant and inversely proportional to node centrality, providing a robust theoretical basis for utilizing the DSM as a diffusion operator. In parallel, structural optimization and residual compensation techniques have demonstrated critical importance in scaling deep representation learning. In the domain of Large Language Models (LLMs), architectures such as Manifold-Constrained Hyper-Connections (mHC) \cite{xie2025mhc} address training instability caused by diversified connectivity patterns. By projecting expanded residual spaces back onto a specific manifold, mHC restores the identity mapping property intrinsic to residual connections, thereby ensuring efficiency and stability. Our Residual Mass Compensation mechanism applies an analogous theoretical principle to graph topologies: by identifying the algebraic leakage caused by truncation and analytically folding it back into the diagonal, we constrain the approximated operator to the exact row-stochastic manifold, restoring structural integrity and diffusion stability.

% ----------------------------------------------------------------------
% Section 3: Approximating the Doubly Stochastic Graph Matrix
% ----------------------------------------------------------------------
\section{Approximating the Doubly Stochastic Graph Matrix}
\label{sec:dsm_approx}

The modified Laplacian matrix of a graph is defined as $\tilde{L} = I + L$, where $I$ is the identity matrix and $L$ is the standard graph Laplacian. Recent studies show that the inverse of this modified Laplacian, denoted as $B = \tilde{L}^{-1}$, is a doubly stochastic matrix \cite{merris1997doubly, andrade2025doubly}. This doubly stochastic graph matrix (DSM) captures deep structural properties of the graph topology. Specifically, it has the following special properties:

\begin{itemize}
    \item \textbf{Row Attenuation and Proximity:} The entries in the DSM represent a measure of relative proximity between nodes \cite{andrade2025doubly}. In any row $i$, the diagonal entry $b_{ii}$ is strictly the largest element \cite{merris1997doubly, andrade2025doubly}. As we move away from node $i$, the values $b_{ij}$ drop off and decay as a function of the combinatorial distance between node $i$ and node $j$ \cite{andrade2025doubly}. 
    \item \textbf{Node Centrality Indicator:} The diagonal entries of the DSM act as a powerful metric for node centrality \cite{andrade2025doubly}. A smaller value of $b_{ii}$ on the diagonal strongly suggests that node $i$ acts as a \textbf{central} vertex in the graph because it has short distances to many other nodes in the network \cite{andrade2025doubly}.
\end{itemize}

Despite these excellent properties, computing the exact inverse of $\tilde{L}$ requires $O(n^3)$ operations, where $n$ is the number of nodes. For large-scale graphs commonly used in data mining and deep learning, this direct matrix inversion is computationally impossible and requires too much memory. To solve this problem, we propose an efficient approximation method based on matrix splitting and the Neumann Series  \cite{von1947numerical}.

\begin{theorem}[Neumann Series] 
\label{thm:neumann}
Let $P$ be a square matrix. If the spectral radius of $P$ is strictly less than 1, then the matrix $(I - P)$ is invertible, and its inverse can be expanded as the infinite sum:
\begin{equation}
(I - P)^{-1} = \sum_{k=0}^{\infty} P^k
\label{eq:neumann}
\end{equation}
\end{theorem}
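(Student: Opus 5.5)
The plan is to prove the Neumann series statement in two stages. First we show that the partial sums $S_N = \sum_{k=0}^{N} P^k$ converge. Then we identify the limit as $(I-P)^{-1}$ through the telescoping identity
\begin{equation*}
(I - P) S_N = S_N (I - P) = I - P^{N+1}.
\end{equation*}
This identity follows by direct expansion: every intermediate power cancels. Given this, if $P^{N+1} \to 0$ and $S_N \to S$, we pass to the limit (multiplication by the fixed matrix $I-P$ is continuous) and obtain $(I-P)S = S(I-P) = I$. So $I-P$ is invertible with inverse $S = \sum_{k\ge 0} P^k$.

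Everything therefore rests on one analytic fact: if $\rho(P) < 1$, then $\sum_k \|P^k\|$ converges for some matrix norm. This is the main obstacle. The hypothesis concerns the spectral radius, not a norm, and $\|P\|$ may exceed $1$ even when $\rho(P)<1$.

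I would handle this with Gelfand's formula, $\rho(P) = \lim_{k\to\infty} \|P^k\|^{1/k}$, valid for any submultiplicative norm. Fix $r$ with $\rho(P) < r < 1$. Then $\|P^k\| \le r^k$ for all sufficiently large $k$, so the series is dominated by a convergent geometric series. Completeness of the finite-dimensional matrix space then gives absolute convergence of $S_N$, and $P^{N+1} \to 0$ follows immediately.

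If we prefer not to cite Gelfand's formula, we can take a more elementary route. For any $\varepsilon>0$ there is an induced norm with $\|P\| \le \rho(P)+\varepsilon$; this is built from the Schur or Jordan form by conjugating with $\mathrm{diag}(1,\delta,\delta^2,\dots)$ so that the off-diagonal entries shrink. Choosing $\varepsilon$ with $\rho(P)+\varepsilon<1$ gives the geometric bound $\|P^k\| \le \|P\|^k$ directly.

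Invertibility of $I-P$ can also be seen independently. Every eigenvalue of $I-P$ has the form $1-\lambda$ with $|\lambda|<1$, so it is nonzero. The convergence argument above, however, supplies both invertibility and the explicit formula at once.
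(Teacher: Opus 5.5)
Your proof is correct. The telescoping identity $(I-P)S_N = S_N(I-P) = I-P^{N+1}$, combined with a geometric bound on $\|P^k\|$, gives both invertibility and the series; either Gelfand's formula or an induced norm with $\|P\|\le\rho(P)+\varepsilon$ supplies that bound. The paper contains no proof of this theorem to compare against. It quotes the result as a classical fact \cite{von1947numerical} and applies it only to $P=\tilde{D}^{-1}A$, for which it first checks $\|P\|_\infty=\max_i d_i/(d_i+1)<1$ and then passes to $\rho(P)\le\|P\|_\infty<1$.

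In that application the step you single out as the main obstacle never arises. Submultiplicativity gives $\|P^k\|_\infty\le\|P\|_\infty^k$ directly, so neither Gelfand's formula nor the Schur/Jordan renorming is needed, and the paper's detour through the spectral radius is unnecessary for its purposes. What your general argument buys is the theorem under the stated hypothesis $\rho(P)<1$ alone. That covers matrices such as nilpotent ones with large entries, whose standard norms all exceed $1$.

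Your identity is also exactly what the paper uses later without naming it. The mass-leakage formula \eqref{eq:mass_leak} is $S_K(I-P)=I-P^{K+1}$ applied to $\tilde{D}^{-1}\mathbf{1}=(I-P)\mathbf{1}$. The factorization $E_K=P^{K+1}B$ in the proof of Theorem~\ref{thm:trunc_error} is the tail form $S-S_K=P^{K+1}S$, multiplied on the right by $\tilde{D}^{-1}$.
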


We cannot apply Theorem \ref{thm:neumann} directly to the standard Laplacian $L$ because its maximum eigenvalue is typically much larger than 1, which would cause the series to diverge. Instead, we separate the diagonal and off-diagonal parts of the modified Laplacian. Let $D$ be the degree matrix and $A$ be the adjacency matrix, so that $L = D - A$. We can write the modified Laplacian as:
\begin{equation}
\tilde{L} = I + D - A
\label{eq:mod_lap}
\end{equation}

To keep the mathematical notation simple, we define the augmented degree matrix as $\tilde{D} = I + D$. We then factor out $\tilde{D}$ from the equation:
\begin{equation}
\tilde{L} = \tilde{D} - A = \tilde{D} (I - \tilde{D}^{-1}A)
\label{eq:mod_lap_factor}
\end{equation}

Now, we take the inverse of both sides to find the DSM:
\begin{equation}
\tilde{L}^{-1} = (I - \tilde{D}^{-1}A)^{-1} \tilde{D}^{-1}
\label{eq:inverse_split}
\end{equation}

Let us define the transition matrix as $P = \tilde{D}^{-1}A$. The sum of the $i$-th row of $P$ is exactly $d_i / (1 + d_i)$, where $d_i$ is the degree of node $i$. Since $d_i \ge 0$, we have $d_i / (1 + d_i) < 1$. This guarantees that the infinity norm satisfies $\|P\|_{\infty} < 1$. Because the spectral radius $\rho(P)$ of any matrix is bounded by its induced norms, this mathematically ensures that $\rho(P) \le \|P\|_{\infty} < 1$. 

Because the convergence condition $\rho(P) < 1$ is strictly met, we can safely apply Theorem \ref{thm:neumann} to Equation \ref{eq:inverse_split}. The final approximation for the doubly stochastic graph matrix is:
\begin{equation}
\tilde{L}^{-1} = \left( \sum_{k=0}^{\infty} (\tilde{D}^{-1}A)^k \right) \tilde{D}^{-1}
\label{eq:final_approx}
\end{equation}

By truncating this infinite sum to a finite number of steps $K$, we can compute the DSM using only sparse matrix multiplications. This completely avoids the dense matrix inversion and drastically reduces the computational complexity for large graphs.

% ----------------------------------------------------------------------
% Section 4: Spectral Properties
% ----------------------------------------------------------------------
\section{Spectral Properties}
\label{sec:spectral_properties}

The spectral properties \cite{chung1997spectral, spielman2007spectral} of the doubly stochastic graph matrix provide a rigorous mathematical framework for analyzing information propagation on graphs. To understand the diffusion dynamics governed by this matrix, it is necessary to examine its spectrum and establish the precise algebraic relationship between the eigenvalues of the standard Laplacian and those of the doubly stochastic graph matrix.

\begin{lemma}
\label{lem:eigen_b}
Let $L$ be the Laplacian matrix of a graph $G$ with eigenvalues $0 = \lambda_1 \le \lambda_2 \le \dots \le \lambda_n$. Let $B = (I + L)^{-1}$ be the corresponding doubly stochastic graph matrix. The eigenvalues of $B$, denoted as $\mu_i$, are given by $\mu_i = \frac{1}{1+\lambda_i}$, and they are strictly bounded in the interval $(0, 1]$.
\end{lemma}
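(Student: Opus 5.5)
The plan is to diagonalize $L$ orthogonally and read the spectrum of $B$ directly off that diagonalization. The graph is undirected, so $L = D - A$ is real symmetric, and the spectral theorem gives an orthogonal $U$ with
\[
L = U \Lambda U^{\top}, \qquad \Lambda = \mathrm{diag}(\lambda_1, \dots, \lambda_n).
\]
Because $I = UU^{\top}$, it follows that $I + L = U(I + \Lambda)U^{\top}$. The diagonal matrix $I + \Lambda$ has entries $1 + \lambda_i$.

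The first key step is to show that every entry $1 + \lambda_i$ is nonzero, and in fact at least $1$. This is where positive semidefiniteness of $L$ enters. I will use the standard quadratic form identity
\[
x^{\top} L x = \sum_{(i,j) \in E} (x_i - x_j)^2 \ge 0,
\]
which gives $\lambda_i \ge 0$ for all $i$. The case $\lambda_1 = 0$ is witnessed by the all-ones vector, since $L\mathbf{1} = 0$. Hence $1 + \lambda_i \ge 1 > 0$, so $I + L$ is invertible. This is consistent with the invertibility obtained from the Neumann argument in Section~\ref{sec:dsm_approx}.

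The next step is to invert the diagonalization:
\[
B = (I + L)^{-1} = U (I + \Lambda)^{-1} U^{\top}.
\]
This shows that $B$ shares the eigenvectors of $L$, with eigenvalues $\mu_i = 1/(1 + \lambda_i)$. Equivalently, $Lu = \lambda u$ gives $(I + L)u = (1 + \lambda)u$, and therefore $Bu = (1 + \lambda)^{-1}u$. Since both matrices are diagonalizable, this accounts for the full spectrum with multiplicities.

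The bounds then follow at once from $1 + \lambda_i \in [1, \infty)$, which gives $\mu_i \in (0, 1]$. The value $\mu_1 = 1$ is attained at $\lambda_1 = 0$. This matches the double stochasticity of $B$, since $B\mathbf{1} = \mathbf{1}$.

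The only genuinely substantive step is establishing $\lambda_i \ge 0$. That relies on $L$ being the combinatorial Laplacian of an undirected graph with nonnegative edge weights, so I will state that assumption explicitly. If the graph were directed, symmetry would fail and the lemma would need reinterpretation. I will also note that the ordering is reversed: $\mu_1 \ge \dots \ge \mu_n$.
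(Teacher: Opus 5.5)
Your proposal is correct and takes essentially the paper's route: both send each eigenpair $(\lambda, v)$ of $L$ to $(1/(1+\lambda), v)$ for $B$, using positive semidefiniteness of $L$ to get $1+\lambda \ge 1$. Your orthogonal diagonalization $L = U\Lambda U^{\top}$ also makes explicit that this covers the full spectrum with multiplicities. Your quadratic-form identity supplies the semidefiniteness that the paper only asserts.
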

\textit{Proof.} See Appendix \ref{app:proof_lemma1}.

\begin{definition}
\label{def:spectral_gap}
For a Markov transition matrix or a doubly stochastic operator $P$ with eigenvalues ordered by magnitude $1 = |\mu_1| \ge |\mu_2| \ge \dots \ge |\mu_n|$, the spectral gap $\gamma$ is defined as the difference between the moduli of the two largest eigenvalues, specifically $\gamma = 1 - |\mu_2|$.
\end{definition}

A common source of confusion in algebraic graph theory is whether the spectral gap should be measured using the smallest or the largest eigenvalues. This distinction depends entirely on the nature of the operator being analyzed. The standard graph Laplacian acts as a discrete differential operator. In this continuous-time diffusion context, the system's ground state corresponds to the null space, where the minimal eigenvalue is $0$. The rate at which diffusion minimizes differences across the graph is dictated by the first non-zero eigenvalue, $\lambda_2$, making the gap $\lambda_2 - 0 = \lambda_2$.

\begin{theorem}
\label{thm:spectral_gap}
The spectral gap of the doubly stochastic graph matrix $B$ is determined by the difference between its two largest eigenvalues, giving $\gamma = 1 - \frac{1}{1+\lambda_2}$, where $\lambda_2$ is the algebraic connectivity of $G$.
\end{theorem}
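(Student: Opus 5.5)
We start from Lemma \ref{lem:eigen_b}, which says that the eigenvalues of $B=(I+L)^{-1}$ are exactly $\mu_i = 1/(1+\lambda_i)$. The whole argument then comes down to ordering these eigenvalues by modulus, identifying the two largest, and substituting them into Definition \ref{def:spectral_gap}.

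First, we record two facts about $L$. It is symmetric positive semidefinite, so all $\lambda_i$ are real and nonnegative. Hence every $\mu_i$ is real and positive, and $|\mu_i| = \mu_i$. This is already contained in the interval statement of Lemma \ref{lem:eigen_b}.

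Second, the map $f(\lambda) = 1/(1+\lambda)$ is strictly decreasing on $[0,\infty)$. The ascending order $0=\lambda_1\le\lambda_2\le\dots\le\lambda_n$ therefore becomes the descending order $1=\mu_1\ge\mu_2\ge\dots\ge\mu_n>0$.

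Third, we check the normalization required by Definition \ref{def:spectral_gap}. We need $|\mu_1|=1$. This holds because $\lambda_1=0$, with eigenvector $\mathbf{1}$, since $L\mathbf{1}=0$ gives $B\mathbf{1}=\mathbf{1}$. It is also consistent with $B$ being doubly stochastic.

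Fourth, we identify the second largest modulus as $\mu_2 = 1/(1+\lambda_2)$. Substituting gives
\[
\gamma = 1-|\mu_2| = 1-\frac{1}{1+\lambda_2} = \frac{\lambda_2}{1+\lambda_2}.
\]

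The only delicate point is that Definition \ref{def:spectral_gap} orders eigenvalues by modulus. One must make sure no negative eigenvalue of large modulus displaces $\mu_2$, as can happen for general stochastic matrices such as random-walk matrices of bipartite graphs. Positivity of all $\mu_i$ rules this out. Multiplicities cause no trouble: if $G$ is disconnected, then $\lambda_2=0$, $\mu_2=1$ and $\gamma=0$, which the formula still covers.

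We would close by noting that $\gamma$ is increasing in $\lambda_2$. It is positive if and only if $G$ is connected, and it lies in $[0,1)$.
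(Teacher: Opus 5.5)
Your proof is correct and takes essentially the same route as the paper. The paper's appendix proof also rests on Lemma~\ref{lem:eigen_b} plus positivity of the eigenvalues of $B$ (so $|\mu_2| = \mu_2$); it frames this as transient modes decaying like $\mu_i^k$ under repeated application of $B$, while your explicit checks that $\mu_1 = 1$ via $B\mathbf{1}=\mathbf{1}$ and that no negative eigenvalue can displace $\mu_2$ in the modulus ordering make the identification slightly tighter.
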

\textit{Proof.} See Appendix \ref{app:proof_theorem2}.

\begin{figure*}[htbp]
    \includegraphics[width=\textwidth]{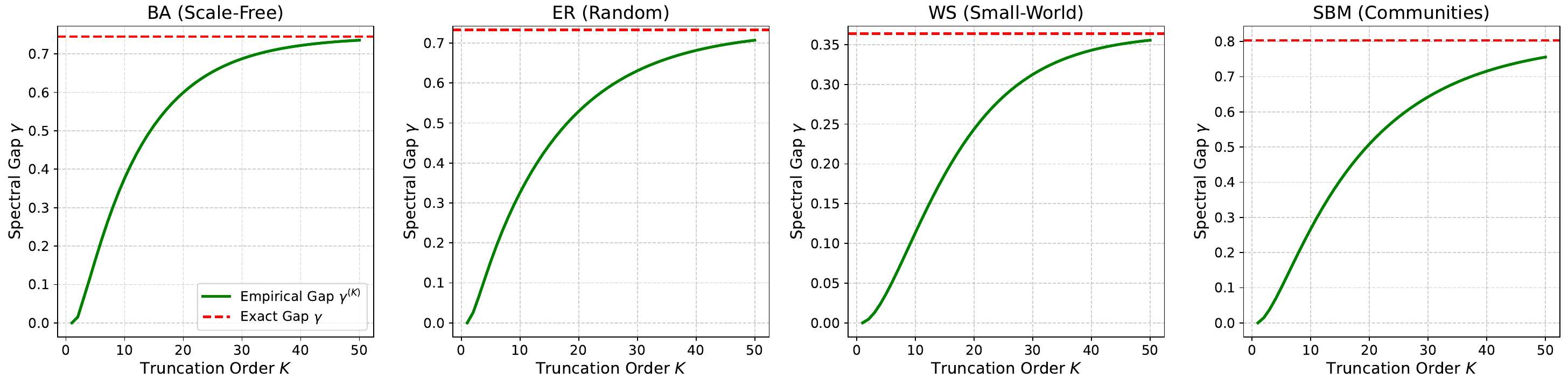} 
    \caption{Convergence of the empirical spectral gap $\gamma^{(K)}$ of the compensated truncated DSM $\hat{B}_K$ toward the exact spectral gap $\gamma$ across four distinct graph topologies ($N=1000$). At low truncation orders ($K$), the empirical gap remains significantly smaller than the exact gap, indicating a deliberately restricted mixing time. As $K \to \infty$, the gap monotonically converges to the theoretical topological bound derived in Theorem \ref{thm:spectral_gap}.}
    \label{fig:spectral_gap}
\end{figure*}

To empirically validate Theorem \ref{thm:spectral_gap}, Figure \ref{fig:spectral_gap} illustrates the convergence of the empirical spectral gap $\gamma^{(K)} = 1 - \mu_2^{(K)}$ for the compensated truncated matrix $\hat{B}_K$. At low truncation orders, $\gamma^{(K)}$ is heavily attenuated ($\mu_2^{(K)} \approx 1$), demonstrating that finite truncation explicitly acts as a spectral regularizer. By restricting the global mixing rate, this attenuation slows the exponential decay of principal transient modes, inherently preserving high-frequency spatial signals and delaying over-smoothing. As $K \to \infty$, $\gamma^{(K)}$ monotonically converges to the exact topological limit dictated by the algebraic connectivity $\lambda_2$. This spectral trajectory encapsulates the fundamental trade-off between over-smoothing and over-squashing in deep Graph Neural Networks, which is rigorously bounded by the Cheeger constant. A large exact gap $\gamma$ implies a high Cheeger constant (an expansive topology lacking severe bottlenecks); while this accelerates mixing to alleviate over-squashing, it inevitably drives node representations toward an indistinguishable stationary state. Conversely, a small gap protects local structural diversity but mathematically necessitates topological bottlenecks, thereby exacerbating over-squashing. Consequently, parameterizing the diffusion operator via $K$ provides a theoretically grounded mechanism to explicitly control this spectral gap, effectively balancing localized feature retention against global steady-state mixing.

% ----------------------------------------------------------------------
% Section 5: Truncated Neumann Series
% ----------------------------------------------------------------------
\section{Truncated Neumann Series}
\label{sec:truncated_neumann}

While the exact doubly stochastic graph matrix (DSM) $B = \tilde{L}^{-1}$ captures the global topology of the graph, computing the exact inverse requires $O(n^3)$ operations. To make this tractable for large-scale graph neural networks, we truncate the infinite Neumann series to a finite order $K$. Let the transition matrix be $P = \tilde{D}^{-1}A$. The $K$-th order truncated DSM is defined as:
\begin{equation}
B_K = \sum_{k=0}^{K} P^k \tilde{D}^{-1}
\label{eq:trunc_b_k}
\end{equation}

Beyond reducing the computational complexity to $O(K|E|)$, this truncation serves a fundamental regularizing purpose in deep representation learning. The exact DSM assumes a receptive field of the entire graph, which inevitably causes node features to exponentially mix toward indistinguishable states, exacerbating the over-smoothing problem. By truncating the series at step $K$, we mathematically force the attention weights between nodes with a shortest-path distance greater than $K$ to be strictly zero. This imposes a hard localized receptive field, allowing the network to capture essential structural properties without over-mixing. The truncation process modifies the original properties of the exact DSM in the following predictable ways:

\begin{itemize}
    \item \textbf{Localized Proximity and Bounded Receptive Field:} The exact DSM exhibits strict diagonal dominance and exponential distance decay \cite{merris1997doubly, andrade2025doubly}. Truncating to $K$ terms preserves these properties within a strict $K$-hop boundary. Isolating the individual entries of $B_K$ yields:
    \begin{equation}
    b_{ij}^{(K)} = 
    \begin{cases} 
    \frac{1}{d_i+1} \left( 1 + \sum_{k=1}^K (P^k)_{ii} \right) & \text{if } i = j \\ 
    \frac{1}{d_j+1} \sum_{k=1}^K (P^k)_{ij} & \text{if } i \neq j 
    \end{cases}
    \label{eq:trunc_b_ij}
    \end{equation}
    Since $\|P\|_\infty < 1$, the transition probabilities strictly decay with $k$. The zero-th order term $\frac{1}{d_i+1}$ provides a massive static mass exclusively to the self-loop, guaranteeing $b_{ii}^{(K)}$ remains the strictly dominant element in its row. Crucially, by the algebraic properties of transition matrices, $(P^k)_{ij} = 0$ if no path of length $k$ exists between $i$ and $j$. Consequently, $b_{ij}^{(K)}$ is mathematically forced to zero for any topological distance exceeding $K$, structurally preventing the integration of global noise.

    \item \textbf{Local Centrality Indicator:} While the exact $b_{ii}$ measures global centrality \cite{andrade2025doubly}, Equation \ref{eq:trunc_b_ij} demonstrates that $b_{ii}^{(K)}$ functions as a rigorous $K$-hop local centrality metric. A small $b_{ii}^{(K)}$ strictly requires two structural conditions: a large local degree $d_i$ and a low return probability sum $\sum_{k=1}^K (P^k)_{ii}$. Topologically, this identifies nodes situated in expansive local neighborhoods where random walks rapidly diffuse outward rather than being trapped in local cycles.
\end{itemize}

To guarantee that the truncated matrix $B_K$ is a reliable proxy for the exact DSM, we must establish a rigorous upper bound on the approximation error.

\begin{theorem}[Truncation Error Bound]
\label{thm:trunc_error}
Let $B$ be the exact doubly stochastic graph matrix and $B_K$ be its $K$-th order Neumann series approximation. Let $d_{max}$ be the maximum node degree in the graph. The infinity norm of the approximation error $E_K = B - B_K$ decays exponentially with respect to $K$, bounded by:
\begin{equation}
\|E_K\|_\infty \le \left( \frac{d_{max}}{d_{max}+1} \right)^{K+1}
\label{eq:trunc_bound}
\end{equation}
\end{theorem}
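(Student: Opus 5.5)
The plan is to write the tail of the Neumann series in closed form, factor it through the exact DSM, and then use submultiplicativity of the induced $\infty$-norm. By Equation \ref{eq:final_approx} and the definition of $B_K$ in Equation \ref{eq:trunc_b_k},
\begin{equation*}
E_K = B - B_K = \sum_{k=K+1}^{\infty} P^k \tilde{D}^{-1} = P^{K+1}\left(\sum_{j=0}^{\infty} P^j\right)\tilde{D}^{-1} = P^{K+1} B .
\end{equation*}
The re-indexing is legitimate because the series converges absolutely. This follows from $\|P\|_\infty < 1$, established in Section \ref{sec:dsm_approx}. We then get $\|E_K\|_\infty \le \|P\|_\infty^{K+1}\,\|B\|_\infty$.

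Next I would evaluate the two factors.

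\emph{The factor $\|P\|_\infty$.} Since $P=\tilde{D}^{-1}A$ is entrywise nonnegative, $\|P\|_\infty$ is its maximum row sum, $\max_i d_i/(d_i+1)$. The map $x\mapsto x/(x+1)$ is increasing, so this maximum equals $d_{max}/(d_{max}+1)$.

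\emph{The factor $\|B\|_\infty$.} I would show $\|B\|_\infty = 1$ directly rather than only citing the doubly stochastic property.
\begin{itemize}
\item Nonnegativity: every term $P^k\tilde{D}^{-1}$ in Equation \ref{eq:final_approx} is entrywise nonnegative, so $B\ge 0$ entrywise.
\item Row sums: $L\mathbf{1}=0$ gives $\tilde{L}\mathbf{1}=\mathbf{1}$, hence $B\mathbf{1}=\tilde{L}^{-1}\mathbf{1}=\mathbf{1}$.
\end{itemize}
Together these give every absolute row sum of $B$ equal to $1$, so $\|B\|_\infty=1$. Combining with the first factor yields exactly the bound (\ref{eq:trunc_bound}).

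The main point needing care is the choice of factorization. The naive estimate $\|E_K\|_\infty \le \sum_{k>K}\|P\|_\infty^k\|\tilde{D}^{-1}\|_\infty$ loses a factor $1/(1-q)=d_{max}+1$, where $q=d_{max}/(d_{max}+1)$, and so does not give the stated constant. The key step is therefore to absorb the geometric tail back into $B$ itself and exploit the exact row-stochasticity of $B$. Everything else is routine.
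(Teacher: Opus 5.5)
Your proof is correct and follows the paper's argument: you factor the tail as $E_K = P^{K+1}B$, apply submultiplicativity, and use $\|P\|_\infty = d_{max}/(d_{max}+1)$ and $\|B\|_\infty = 1$. Your direct check of $\|B\|_\infty = 1$ via $B \ge 0$ and $\tilde{L}\mathbf{1} = \mathbf{1}$ is a small improvement on the paper's appeal to double stochasticity. Your closing aside is slightly off, though this does not affect the proof: the naive tail estimate loses the factor $(d_{max}+1)/(d_{min}+1)$, not $d_{max}+1$, because $\|\tilde{D}^{-1}\|_\infty = 1/(d_{min}+1)$, so it does recover the stated constant on regular graphs.
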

\textit{Proof.} See Appendix \ref{app:proof_theorem3}.

\subsection{Residual Mass Compensation: Restoring Topological Conservation}
\label{subsec:residual_mass}

While Theorem \ref{thm:trunc_error} guarantees that the global truncation error strictly bounds the difference between $B$ and $B_K$, the physical distribution of this error across the graph topology is highly non-uniform. By truncating the infinite series, the diffusion system becomes algebraically \textbf{open}, causing probability mass to leak. To precisely quantify this leakage, we analyze the row sums of the truncated matrix $B_K$ by multiplying it with the all-ones vector $\mathbf{1}$. Recall that the transition matrix is $P = \tilde{D}^{-1}A$ and the node degree vector is $A\mathbf{1} = \mathbf{d}$. Since $\tilde{D} = I + D$, we have $\mathbf{1} = \tilde{D}^{-1}(I + D)\mathbf{1} = \tilde{D}^{-1}\mathbf{1} + \tilde{D}^{-1}\mathbf{d}$. This algebraic relation implies $\tilde{D}^{-1}\mathbf{1} = \mathbf{1} - \tilde{D}^{-1}\mathbf{d} = (I - P)\mathbf{1}$. Substituting this identity into the definition of $B_K$ yields a perfect telescoping sum:
\begin{equation}
B_K \mathbf{1} = \sum_{k=0}^K P^k \tilde{D}^{-1} \mathbf{1} = \sum_{k=0}^K P^k (I - P) \mathbf{1} = \mathbf{1} - P^{K+1} \mathbf{1}
\label{eq:mass_leak}
\end{equation}

The term $P^{K+1} \mathbf{1}$ represents the exact residual mass, which is the probability that a random walk of length $K+1$ has not yet terminated. In the exact doubly stochastic graph matrix, the termination probability at any step is strictly $\frac{1}{d_i+1}$. Consequently, highly central nodes with large degrees possess extremely low termination probabilities. When the series is forcibly truncated at step $K$, these central nodes suffer the largest mass loss, artificially diluting their self-weight and heavily penalizing their topological centrality.

To correct this structural bias without abandoning the hard localized receptive field, we introduce a rigorous Residual Mass Compensation mechanism. Rather than discarding the infinite tail $E_K = \sum_{k=K+1}^{\infty} P^k \tilde{D}^{-1}$, we analytically fold its total residual mass back into the node's origin (the self-loop). The compensated truncated DSM, denoted as $\hat{B}_K$, is defined as:
\begin{equation}
\hat{B}_K = B_K + \text{diag}(P^{K+1} \mathbf{1})
\label{eq:b_hat_k}
\end{equation}

This mathematically elegant compensation provides three critical theoretical guarantees: 1) \textbf{Strict row-stochasticity}, as the effective operator regains absolute mass conservation ($\hat{B}_K \mathbf{1} = (I - P^{K+1})\mathbf{1} + P^{K+1}\mathbf{1} = \mathbf{1}$), perfectly aligning with the normalization requirements of deep graph representation learning; 2) \textbf{Centrality restoration}, because the mass re-injection ensures the diagonal entry $\hat{b}_{ii}$ remains strictly dominant, explicitly preserving the centrality prior of the nodes; and 3) \textbf{Computational efficiency}. Computing the dense matrix power $P^{K+1}$ explicitly would require an intractable $O(n^3)$ operations. However, by exploiting the associativity of matrix multiplication, the residual mass vector is computed from right to left via sequential sparse matrix-vector multiplications (SpMV). Specifically, by initializing $m^{(0)} = \mathbf{1}$, we iteratively compute $m^{(k)} = P m^{(k-1)}$ in parallel with the feature propagation. This strictly bounds the time complexity to $O(K|E|)$, entirely avoiding dense matrix operations.

Empirical results rigorously corroborate this mathematical framework (detailed verification is provided in Appendix \ref{app:empirical_verification}). The actual truncation error is strictly bounded below the theoretical limit derived in Theorem \ref{thm:trunc_error}, showing tight convergence on ER graphs and dropping significantly faster than the worst-case bound on scale-free BA graphs. While the compensated error is slightly higher than the uncompensated error, this numerical discrepancy is a necessary algebraic trade-off to restore absolute probability mass conservation and diagonal structural dominance. Furthermore, the $O(K|E|)$ SpMM-based propagation remains highly scalable, executing in a fraction of a second even at $K=100$, completely circumventing the severe polynomial explosion of the $O(n^3)$ exact matrix inversion as the graph size scales.

% ----------------------------------------------------------------------
% Section 6: Model Architecture
% ----------------------------------------------------------------------
\section{Model Architecture}
\label{sec:architecture}

To effectively utilize the enhanced doubly stochastic graph matrix in deep representation learning, we establish a rigorous architectural framework based on the decoupled graph neural network paradigm. In this section, we first introduce the base decoupled formulation. We then present \textbf{DsmNet}, our foundational model that leverages a truncated Neumann series to execute computationally efficient, strictly localized multi-hop diffusion. Finally, to resolve the algebraic probability mass leakage inherently caused by truncation, we propose \textbf{DsmNet-compensate}, an advanced variant that iteratively tracks and re-injects this leaked mass during the forward pass, thereby perfectly restoring node centrality and absolute row-stochasticity.

\subsection{Decoupled Graph Neural Networks}
To address the limitations of coupled layer-wise architectures, decoupled models, initially popularized by methods such as APPNP \cite{gasteiger_predict_2019}, separate the neural feature transformation from the topological propagation. The architecture operates in two distinct phases. First, a local neural network generates base predictions solely from node features:
\begin{equation}
Z^{(0)} = f_\theta(X)
\label{eq:decoupled_mlp}
\end{equation}
Second, these base predictions are propagated through the graph using a linear diffusion operator $\Pi$. The final output $Z$ is computed as $Z = \Pi Z^{(0)}$. This design ensures the model captures $K$-hop topological dependencies without exponentially increasing trainable parameters or causing severe gradient vanishing.

\subsection{DsmNet: Truncated DSM Diffusion}
Our foundational model, DsmNet, utilizes the $K$-th order truncated Neumann series $B_K$ as the exact diffusion operator. Let the transition matrix be $P = \tilde{D}^{-1}A$. The forward propagation is defined as:
\begin{equation}
Z = B_K Z^{(0)} = \left( \sum_{k=0}^K P^k \tilde{D}^{-1} \right) Z^{(0)}
\label{eq:dsmnet_forward}
\end{equation}
This formulation guarantees an $O(K|E|)$ computational complexity by executing sequential sparse matrix multiplications, enforcing a hard localized receptive field strictly bounded by $K$ hops.

\subsection{DsmNet-compensate: Residual Mass Compensation}
Because the finite truncation in DsmNet mathematically leads to probability mass leakage, which arbitrarily penalizes the structural weights of highly central nodes, we introduce DsmNet-compensate. This variant integrates the Residual Mass Compensation mechanism, deploying $\hat{B}_K$ as the propagation operator:
\begin{equation}
Z = \hat{B}_K Z^{(0)} = \left( \sum_{k=0}^K P^k \tilde{D}^{-1} + \text{diag}(P^{K+1} \mathbf{1}) \right) Z^{(0)}
\label{eq:dsmnet_comp_forward}
\end{equation}
To maintain strict linear computational complexity $O(K|E|)$ and completely avoid dense matrix operations, we compute this forward pass through synchronous iterative state updates using exclusively sparse matrix operations. First, we initialize the feature state $S^{(0)} = \tilde{D}^{-1} Z^{(0)}$ and the mass tracking vector $m^{(0)} = \mathbf{1}$. For step $k = 1, \dots, K$, we update both states synchronously:
\begin{align}
S^{(k)} &= P S^{(k-1)} + S^{(0)} \label{eq:update_s} \\
m^{(k)} &= P m^{(k-1)} \label{eq:update_m}
\end{align}
After $K$ iterations, the final representation is obtained by folding the exact residual mass tail $m^{(K+1)} = P m^{(K)}$ back into the initial predictions:
\begin{equation}
Z = S^{(K)} + \text{diag}(m^{(K+1)}) Z^{(0)}
\label{eq:final_z}
\end{equation}
This mathematically closed formulation catches escaping random walk probabilities at the $K$-hop boundary and re-injects them into the ego-node, restoring absolute probability mass conservation.

\subsection{Intuitive Explanation of the Forward Pass}
Intuitively, this synchronous iterative process represents a strict steady-state diffusion governed by a mathematically closed system. The feature propagation sequence, $S^{(k)} = P S^{(k-1)} + S^{(0)}$, continuously re-injects the node's original localized features while pushing accumulated information outward. This mechanism forces features to traverse exactly $K$ hops to capture neighborhood structures, while the continuous re-injection prevents the signal from being entirely diluted by distant noise. Simultaneously, the mass tracking vector $m^{(k)}$ acts as a rigorous topological accounting system. Because early truncation forces random walks to terminate prematurely, the probability mass naturally leaks out of the algebraic system. The operation $\text{diag}(m^{(K+1)}) Z^{(0)}$ functions as a topological sink: it accurately aggregates all the escaping random walk probabilities at the $K$-hop boundary and immediately folds them back into the ego-node's self-loop. This ensures that highly central nodes, which intrinsically possess large residual mass due to their expansive connectivity, correctly retain their structural dominance. The resulting framework perfectly mimics the local proximity and centrality properties of the exact doubly stochastic graph matrix at a fraction of the computational cost.

% ----------------------------------------------------------------------
% Section 7: Experiments
% ----------------------------------------------------------------------
\section{Empirical Evaluation}
\label{sec:experiments}

To rigorously validate the theoretical advantages of the truncated doubly stochastic graph matrix, we design a comprehensive set of experiments. Our evaluation aims to verify whether the decoupled architecture can prevent over-smoothing, capture long-range dependencies, and scale efficiently to large graphs.

\subsection{Datasets}
To rigorously evaluate the efficacy of the proposed DsmNet, we utilize a comprehensive suite of benchmark datasets characterized by diverse topological properties. For homophilic graphs \cite{zhu2020beyond, zheng2024missing}, where connected nodes tend to share similar labels, we include the Planetoid citation networks \cite{yang2016revisiting} (Cora, CiteSeer, and PubMed), the Amazon co-purchase networks \cite{shchur2018pitfalls} (Computers and Photo), and the Coauthor networks \cite{shchur2018pitfalls} (CS and Physics). To assess the model's robustness under heterophily \cite{luan2024heterophily, luan2024heterophilic}, we employ the WebKB web-page datasets \cite{pei2020geom} (Texas, Cornell, and Wisconsin) and the Wikipedia networks \cite{rozemberczki2021multi} (Chameleon and Squirrel), where edge connections often bridge nodes from different classes. 

\subsection{Baseline Models}
To provide a comprehensive evaluation of the proposed DsmNet, we compare its performance against several representative baseline models that have defined the evolution of Graph Neural Networks. ChebNet \cite{defferrard2016convolutional} implements fast localized spectral filters through the use of Chebyshev polynomials to circumvent the high computational costs of eigendecompositions. GCN \cite{kipf2017semi} further simplifies this framework using a first-order approximation of spectral graph convolutions, resulting in an efficient, scalable model for semi-supervised classification. For inductive tasks, GraphSAGE \cite{hamilton2017inductive} introduces a framework that generates embeddings by sampling and aggregating features from local neighborhoods rather than training individual node parameters. GAT \cite{velickovic2018graph} enhances these spatial approaches by incorporating self-attentional layers, which allow the model to implicitly assign varying importance to different neighbors. To better adapt to local graph structures, JKNet \cite{xu2018representation} utilizes jumping knowledge connections to flexibly combine representations from different neighborhood ranges. APPNP \cite{gasteiger_predict_2019} separates the neural feature transformation from the topological propagation by employing a Personalized PageRank scheme, which facilitates a larger, adjustable receptive field without increasing the number of trainable parameters. Finally, GCNII \cite{chenWHDL2020gcnii} extends the vanilla GCN to deep architectures by utilizing initial residual connections and identity mapping to effectively mitigate the over-smoothing problem encountered in multi-layer networks.

\subsection{Experiment Settings}
All experiments are conducted on an NVIDIA L40S GPU. We report the mean and standard deviation of the accuracy metrics across 5 independent random seeds. The neural transformation module consists of two linear layers with a hidden size of 64. The truncation order $K$ is tuned within the set $\{10, 20, 30, 40, 50\}$, and the learning rate is optimized within the range $[10^{-5}, 10^{-3}]$ using the Optuna framework \cite{ozaki2025optunahub}. We evaluate the models under two rigorous data split protocols: semi-supervised and fully supervised. In the semi-supervised setting, we sample 20 nodes per class for the training set. For the smaller Texas, Wisconsin, and Cornell datasets, the training set is restricted to 10 nodes per class. In all semi-supervised scenarios, we allocate 20\% of the remaining nodes for validation and 20\% for testing. In the fully supervised setting, the datasets are randomly partitioned into 60\% for training, 20\% for validation, and 20\% for testing.

% --- Homophilic Tables ---
\begin{table*}[htbp]
\centering
\caption{Node classification accuracy (percentage) on homophilic datasets under \textbf{Full Supervised (FULL)} setting. Standard deviations are also scaled. Legend: \colorbox{first}{\scriptsize 1st}, \colorbox{second}{\scriptsize 2nd}, \colorbox{third}{\scriptsize 3rd}.}
\label{tab:homophilic_full}
\resizebox{\textwidth}{!}{
\begin{tabular}{lccccccc}
\toprule
\textbf{Model} & \textbf{Cora} & \textbf{CiteSeer} & \textbf{PubMed} & \textbf{CS} & \textbf{Physics} & \textbf{Photo} & \textbf{Computers} \\
\midrule
GCN       & 87.8 ± 1.2 & 75.5 ± 1.3 & 86.8 ± 0.6 & 89.3 ± 0.4 & 95.4 ± 0.2 & 84.0 ± 1.1 & 66.1 ± 0.9 \\
GAT       & 87.4 ± 0.7 & 76.1 ± 0.8 & 85.6 ± 0.6 & 89.4 ± 0.4 & 95.3 ± 0.2 & 83.7 ± 1.5 & 71.4 ± 1.0 \\
SAGE      & 87.6 ± 0.6 & \cellcolor{third}76.6 ± 1.4 & 89.1 ± 0.4 & \cellcolor{third}93.8 ± 0.6 & \cellcolor{third}96.2 ± 0.2 & 90.9 ± 0.8 & 78.2 ± 0.8 \\
CHEB      & \cellcolor{third}88.2 ± 0.6 & \cellcolor{second}76.8 ± 1.1 & \cellcolor{third}89.6 ± 0.4 & 93.7 ± 0.5 & OOM & 90.6 ± 0.8 & \cellcolor{second}79.7 ± 1.3 \\
APPNP     & 88.1 ± 1.1 & 76.4 ± 1.2 & 75.8 ± 0.6 & 88.0 ± 0.5 & 95.1 ± 0.2 & 82.1 ± 2.6 & 77.9 ± 2.5 \\
GCNII     & 86.4 ± 1.1 & 73.9 ± 1.1 & 85.3 ± 0.4 & 82.2 ± 0.7 & 94.6 ± 0.3 & \cellcolor{second}91.1 ± 0.5 & 78.1 ± 2.9 \\
JKNET     & 85.9 ± 0.9 & 72.7 ± 0.9 & 87.3 ± 0.2 & 92.3 ± 0.6 & 96.1 ± 0.0 & \cellcolor{third}91.0 ± 1.6 & \cellcolor{first}82.4 ± 0.8 \\
\midrule
DSM       & \cellcolor{first}88.8 ± 1.3 & \cellcolor{third}76.6 ± 0.9 & \cellcolor{first}90.3 ± 0.4 & \cellcolor{second}94.5 ± 0.4 & \cellcolor{second}96.9 ± 0.1 & 90.4 ± 1.4 & \cellcolor{third}78.3 ± 2.0 \\
DSM COMP. & \cellcolor{second}88.5 ± 1.1 & \cellcolor{first}77.1 ± 1.2 & \cellcolor{first}90.3 ± 0.4 & \cellcolor{first}94.9 ± 0.3 & \cellcolor{first}97.0 ± 0.2 & \cellcolor{first}92.0 ± 0.7 & 77.7 ± 2.7 \\
\bottomrule
\end{tabular}
}
\end{table*}

\begin{table*}[htbp]
\centering
\caption{Node classification accuracy (percentage) on homophilic datasets under \textbf{Semi-Supervised (SEMI)} setting. Standard deviations are also scaled. Legend: \colorbox{first}{\scriptsize 1st}, \colorbox{second}{\scriptsize 2nd}, \colorbox{third}{\scriptsize 3rd}.}
\label{tab:homophilic_semi}
\resizebox{\textwidth}{!}{
\begin{tabular}{lccccccc}
\toprule
\textbf{Model} & \textbf{Cora} & \textbf{CiteSeer} & \textbf{PubMed} & \textbf{CS} & \textbf{Physics} & \textbf{Photo} & \textbf{Computers} \\
\midrule
GCN       & 78.6 ± 1.7 & 68.6 ± 1.4 & 78.4 ± 2.0 & 91.0 ± 0.6 & 93.5 ± 0.8 & 86.1 ± 2.4 & 70.5 ± 2.8 \\
GAT       & 80.0 ± 1.6 & 68.7 ± 2.3 & 78.2 ± 2.0 & 90.8 ± 0.6 & \cellcolor{third}93.9 ± 0.7 & 87.4 ± 1.2 & \cellcolor{third}75.0 ± 2.3 \\
SAGE      & 77.3 ± 2.4 & 69.1 ± 1.4 & 75.8 ± 2.0 & \cellcolor{second}91.8 ± 0.8 & 93.4 ± 0.8 & \cellcolor{second}90.1 ± 1.7 & \cellcolor{first}79.4 ± 1.3 \\
CHEB      & 77.9 ± 2.1 & 68.7 ± 1.1 & 69.9 ± 1.5 & 89.8 ± 1.1 & OOM & 81.7 ± 3.1 & 61.5 ± 5.8 \\
APPNP     & \cellcolor{third}81.2 ± 1.2 & \cellcolor{third}69.3 ± 1.9 & \cellcolor{third}79.2 ± 2.5 & \cellcolor{third}91.5 ± 0.4 & 90.1 ± 2.6 & 87.0 ± 1.3 & 67.7 ± 3.6 \\
GCNII     & 80.5 ± 1.8 & 60.8 ± 1.8 & 73.1 ± 5.3 & 80.4 ± 2.8 & 92.5 ± 0.7 & 88.1 ± 4.4 & \cellcolor{second}76.3 ± 4.1 \\
JKNET     & 77.1 ± 1.7 & 66.6 ± 2.5 & 77.9 ± 2.8 & 86.2 ± 1.3 & 91.9 ± 2.5 & 84.3 ± 3.7 & 66.9 ± 4.6 \\
\midrule
DSM       & \cellcolor{first}82.0 ± 1.4 & \cellcolor{second}69.6 ± 1.8 & \cellcolor{second}79.5 ± 2.6 & 91.4 ± 1.7 & \cellcolor{first}94.4 ± 0.4 & \cellcolor{third}88.9 ± 0.8 & 48.1 ± 9.6 \\
DSM COMP. & \cellcolor{second}81.5 ± 1.4 & \cellcolor{first}70.5 ± 1.7 & \cellcolor{first}79.8 ± 2.1 & \cellcolor{first}92.5 ± 0.7 & \cellcolor{second}94.1 ± 0.3 & \cellcolor{first}90.7 ± 1.6 & 74.9 ± 2.3 \\
\bottomrule
\end{tabular}
}
\end{table*}

\begin{figure*}[htbp]
    \centering
    \includegraphics[width=\textwidth]{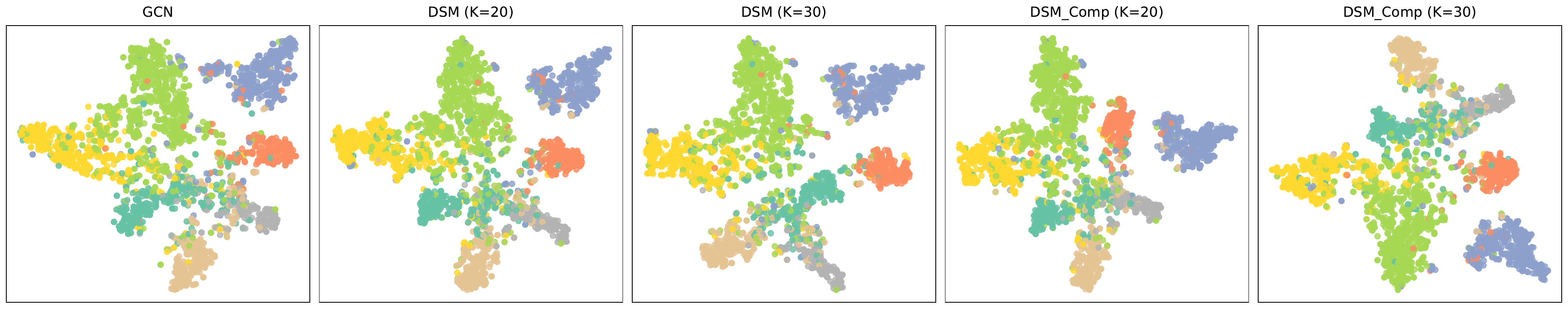}
    \caption{t-SNE visualization of the learned node representations on the homophilic Cora dataset. The latent spaces generated by the baseline GCN are compared against the uncompensated truncated DSM ($K=20, 30$) and the compensated $\hat{B}_K$ ($K=20, 30$). The visual evidence corroborates that the residual mass compensation mechanism enhances intra-class compactness while maintaining clear inter-class decision boundaries, aligning with the quantitative performance advantages.}
    \label{fig:tsne_cora}
\end{figure*}

\subsection{Performance Analysis on Homophilic and Heterophilic Graphs}
As demonstrated in Tables \ref{tab:homophilic_full} and \ref{tab:homophilic_semi}, the proposed DsmNet and DsmNet-compensate models consistently achieve highly competitive performance on homophilic datasets. In these topologies, where connected nodes are highly likely to share identical class labels, the continuous topological proximity captured by the DSM effectively aggregates relevant neighborhood signals. The mass compensation mechanism further anchors high-frequency structural features, yielding leading results across multiple benchmarks.

Conversely, the models exhibit comparatively inferior performance on heterophilic datasets, as detailed in Appendix \ref{app:heterophilic} (Tables \ref{tab:heterophilic_full} and \ref{tab:heterophilic_semi}). This empirical divergence is mathematically intrinsic to the spectral properties of the DSM. Operating fundamentally as a low-pass filter in the spectral domain, the DSM diffuses and smooths representations across connected nodes. In heterophilic graphs, where adjacent vertices frequently belong to disparate classes and exhibit high feature variance, this low-pass diffusion inadvertently compresses effective discriminative information. Consequently, the representations of topologically proximate but semantically distinct nodes become homogenized, negatively impacting classification accuracy.

\subsection{Node Centrality Prediction and Cross-Scale Generalization}
\label{subsec:centrality_task}
To evaluate the structural descriptive power and generalization capability of the doubly stochastic graph matrix (DSM), we design a regression task to predict node-level normalized betweenness centrality. A critical challenge in this experiment is \textbf{cross-scale generalization}: the model is trained on a dataset of 1,000 graphs with only $N=200$ nodes, while it is validated and tested on 300 graphs with $N=1,000$ nodes. This setup tests whether the learned structural priors can generalize to significantly larger graphs. The architecture consists of a 2-layer Graph Convolutional Network (GCN) with a hidden dimension of 64. The input features are either node degrees ($d_i$), exact or truncated DSM diagonal entries ($b_{ii}$), or their concatenated combinations. We utilize a \textit{Pairwise Ranking Loss} to optimize the model's ability to preserve the relative centrality ranking of nodes. Table \ref{tab:feature_ablation} presents the experimental results. All Hit Ratio (HR) and Normalized Discounted Cumulative Gain (NDCG) metrics are expressed as percentages with the standard deviation ($\pm$) across 5 random seeds, rounded to one decimal place. The results clearly indicate that incorporating DSM diagonal features provides a substantial performance boost, especially in Watts–Strogatz (WS) and Random Geometric (RGG) graphs where local degree information is a poor proxy for global betweenness. Notably, the DSM-based features maintain high ranking correlation ($\tau$) even when the graph size increases fivefold from training to testing, proving that $b_{ii}$ captures scale-invariant topological properties.

\begin{table*}[htbp]
\centering
\footnotesize
\caption{Betweenness Centrality ranking performance under cross-scale generalization ($N_{train}=200, N_{test}=1000$). $\tau$ denotes Kendall's Tau, while HR and NDCG are expressed as percentages. Standard deviations are calculated over 5 seeds. Legend: \colorbox{first}{\scriptsize 1st}, \colorbox{second}{\scriptsize 2nd}, \colorbox{third}{\scriptsize 3rd}.}
\label{tab:feature_ablation}
\resizebox{\textwidth}{!}{
\begin{tabular}{llccccc}
\toprule
\textbf{Dataset} & \textbf{Feature Mode} & \textbf{Kendall's $\tau$} & \textbf{HR@30 (\%)} & \textbf{HR@50 (\%)} & \textbf{NDCG@30 (\%)} & \textbf{NDCG@50 (\%)} \\
\midrule
\multirow{7}{*}{\textbf{WS}} 
& Degree Only              & 0.553 ± 0.005 & 49.5 ± 0.4 & 54.6 ± 0.2 & 84.5 ± 0.2 & 85.2 ± 0.1 \\
& Exact DSM               & 0.756 ± 0.003 & 62.9 ± 0.9 & 67.3 ± 0.6 & 89.6 ± 1.0 & 90.9 ± 0.7 \\
& Truncated DSM           & 0.756 ± 0.004 & 62.6 ± 0.5 & 67.1 ± 0.3 & 89.4 ± 0.9 & 90.8 ± 0.6 \\
& Comp. Trunc. DSM        & 0.755 ± 0.003 & 62.7 ± 0.9 & 67.2 ± 0.4 & 89.5 ± 1.0 & 90.9 ± 0.7 \\
& Deg + Exact DSM         & \cellcolor{second}0.769 ± 0.002 & \cellcolor{second}66.5 ± 0.4 & \cellcolor{first}69.0 ± 0.2 & \cellcolor{second}91.6 ± 0.4 & \cellcolor{first}92.3 ± 0.2 \\
& Deg + Truncated DSM     & \cellcolor{third}0.769 ± 0.002 & \cellcolor{third}66.3 ± 0.4 & \cellcolor{second}68.9 ± 0.4 & \cellcolor{third}91.5 ± 0.3 & \cellcolor{third}92.2 ± 0.2 \\
& Deg + Comp. Trunc. DSM  & \cellcolor{first}0.770 ± 0.002 & \cellcolor{first}66.5 ± 0.5 & \cellcolor{third}68.9 ± 0.2 & \cellcolor{first}91.6 ± 0.4 & \cellcolor{second}92.3 ± 0.2 \\
\midrule
\multirow{7}{*}{\textbf{RGG}} 
& Degree Only              & 0.465 ± 0.004 & 26.1 ± 0.4 & 37.8 ± 0.3 & 37.5 ± 0.6 & 42.9 ± 0.5 \\
& Exact DSM               & 0.489 ± 0.002 & 28.0 ± 0.2 & 39.6 ± 0.3 & 39.2 ± 0.3 & 44.9 ± 0.4 \\
& Truncated DSM           & 0.487 ± 0.004 & 27.8 ± 0.2 & 39.4 ± 0.3 & 39.1 ± 0.4 & 44.7 ± 0.4 \\
& Comp. Trunc. DSM        & 0.490 ± 0.004 & 27.8 ± 0.5 & 39.4 ± 0.4 & 39.1 ± 0.6 & 44.7 ± 0.6 \\
& Deg + Exact DSM         & \cellcolor{third}0.602 ± 0.002 & \cellcolor{third}34.0 ± 0.2 & \cellcolor{third}46.9 ± 0.3 & \cellcolor{third}46.5 ± 0.3 & \cellcolor{third}52.5 ± 0.3 \\
& Deg + Truncated DSM     & \cellcolor{second}0.602 ± 0.002 & \cellcolor{second}34.0 ± 0.3 & \cellcolor{second}47.0 ± 0.2 & \cellcolor{second}46.6 ± 0.3 & \cellcolor{second}52.5 ± 0.3 \\
& Deg + Comp. Trunc. DSM  & \cellcolor{first}0.602 ± 0.003 & \cellcolor{first}34.2 ± 0.3 & \cellcolor{first}47.0 ± 0.3 & \cellcolor{first}46.7 ± 0.3 & \cellcolor{first}52.5 ± 0.4 \\
\midrule
\multirow{7}{*}{\textbf{LFR}} 
& Degree Only              & 0.642 ± 0.007 & \cellcolor{first}80.1 ± 0.1 & 70.2 ± 0.2 & \cellcolor{first}98.1 ± 0.1 & \cellcolor{first}96.9 ± 0.1 \\
& Exact DSM               & 0.707 ± 0.004 & 78.0 ± 1.2 & 70.1 ± 0.4 & 95.9 ± 1.3 & 95.0 ± 1.1 \\
& Truncated DSM           & 0.707 ± 0.004 & 78.1 ± 0.9 & 70.1 ± 0.3 & 96.1 ± 0.9 & 95.2 ± 0.8 \\
& Comp. Trunc. DSM        & 0.707 ± 0.004 & 78.0 ± 1.2 & 70.1 ± 0.4 & 95.9 ± 1.3 & 95.0 ± 1.2 \\
& Deg + Exact DSM         & \cellcolor{third}0.709 ± 0.002 & \cellcolor{second}79.1 ± 0.9 & \cellcolor{second}70.4 ± 0.3 & \cellcolor{second}97.4 ± 0.4 & \cellcolor{second}96.4 ± 0.3 \\
& Deg + Truncated DSM     & \cellcolor{second}0.709 ± 0.002 & \cellcolor{third}79.0 ± 0.9 & \cellcolor{first}70.4 ± 0.3 & \cellcolor{third}97.4 ± 0.4 & \cellcolor{third}96.4 ± 0.3 \\
& Deg + Comp. Trunc. DSM  & \cellcolor{first}0.709 ± 0.002 & 79.0 ± 1.0 & \cellcolor{third}70.4 ± 0.4 & 97.4 ± 0.4 & 96.4 ± 0.3 \\
\bottomrule
\end{tabular}
}
\end{table*}

\section{Conclusion}
\label{sec:conclusion}
This paper establishes the Doubly Stochastic graph Matrix (DSM) as a mathematically rigorous alternative to the standard Laplacian operator in Graph Neural Networks. To bypass the $O(n^3)$ complexity of exact matrix inversion, we developed a scalable $O(K|E|)$ truncated Neumann series approximation. Crucially, we identified and resolved the resulting probability mass leakage via a novel Residual Mass Compensation mechanism, which strictly preserves row-stochasticity and node centrality. Both theoretical and empirical analyses demonstrate that our compensated architecture explicitly bounds Dirichlet energy decay, effectively mitigating catastrophic over-smoothing while enforcing a hard localized receptive field. The proposed models achieve highly competitive accuracy on homophilic datasets and serve as robust, continuous structural encodings for generalized Graph Transformers. Future work will focus on the dynamic optimization of the truncation order $K$ for varying local topologies, as well as extending the DSM framework to directed and dynamic graphs.

\bibliographystyle{plainnat}
\bibliography{reference}  

% ----------------------------------------------------------------------
% Appendix
% ----------------------------------------------------------------------
\appendix
\section*{Supplementary Material}

\section{Proofs of Theoretical Results}
\label{app:proofs}

\subsection{Proof of Lemma \ref{lem:eigen_b}}
\label{app:proof_lemma1}
Let $v$ be an eigenvector of $L$ corresponding to the eigenvalue $\lambda$. By definition, we have $L v = \lambda v$. Adding the identity matrix to the operator yields $(I + L) v = (1 + \lambda) v$. Since $L$ is positive semidefinite, $\lambda \ge 0$, which guarantees that $(1 + \lambda) \ge 1$, meaning $(I + L)$ is strictly positive definite and invertible. Multiplying both sides by $(I + L)^{-1}$ provides $(I + L)^{-1} v = \frac{1}{1+\lambda} v$. Thus, every eigenvector of $L$ is an eigenvector of $B$ with the eigenvalue mapped through the strictly decreasing function $f(x) = \frac{1}{1+x}$. Consequently, the ascending order of the Laplacian eigenvalues translates to a descending order of the doubly stochastic matrix eigenvalues: $1 = \mu_1 \ge \mu_2 \ge \dots \ge \mu_n > 0$. \qed

\subsection{Proof of Theorem \ref{thm:spectral_gap}}
\label{app:proof_theorem2}
The matrix $B$ operates as a discrete-time transition matrix of a random walk, acting as an averaging operator rather than a differential one. When $B$ is repeatedly applied to a signal $x$, the operation $B^k x$ projects the signal onto the eigenspace of $B$. The component of $x$ aligned with the principal eigenvector $v_1$ (associated with $\mu_1 = 1$) remains completely unchanged, representing the stationary distribution. The orthogonal transient components decay at rates proportional to $\mu_i^k$. As $k \to \infty$, the survival of the largest transient component is strictly bounded by the second largest eigenvalue, $\mu_2$. Because $B$ is positive definite, all its eigenvalues are positive, meaning $|\mu_2| = \mu_2$. Therefore, the asymptotic rate of exponential decay toward the stationary state is governed exclusively by the separation between the dominant stationary mode and the slowest decaying transient mode, which is exactly $1 - \mu_2$. Substituting the result from Lemma \ref{lem:eigen_b} yields $\gamma = 1 - \frac{1}{1+\lambda_2}$. \qed

\subsection{Proof of Theorem \ref{thm:trunc_error}}
\label{app:proof_theorem3}
By definition, the exact matrix is the infinite sum $B = \sum_{k=0}^{\infty} P^k \tilde{D}^{-1}$. The error matrix $E_K$ is the residual tail of the series:
\begin{equation}
E_K = \sum_{k=K+1}^{\infty} P^k \tilde{D}^{-1}
\end{equation}
We can factor out $P^{K+1}$ from the infinite sum:
\begin{equation}
E_K = P^{K+1} \sum_{m=0}^{\infty} P^m \tilde{D}^{-1} = P^{K+1} B
\end{equation}
Taking the infinity norm (the maximum absolute row sum) on both sides and applying the sub-multiplicative property of matrix norms yields:
\begin{equation}
\|E_K\|_\infty \le \|P\|_\infty^{K+1} \|B\|_\infty
\end{equation}
Since $B$ is an exact doubly stochastic matrix, the sum of every row is exactly $1$, meaning its infinity norm is strictly $\|B\|_\infty = 1$. Next, we evaluate the infinity norm of the transition matrix $P = \tilde{D}^{-1}A$. The $i$-th row sum of $P$ is $\frac{d_i}{d_i+1}$. Because $f(x) = \frac{x}{x+1}$ is a strictly increasing function for $x \ge 0$, the maximum row sum occurs at the node with the maximum degree $d_{max}$. Thus:
\begin{equation}
\|P\|_\infty = \frac{d_{max}}{d_{max}+1}
\end{equation}
Substituting these values back into the inequality, we obtain the absolute upper bound:
\begin{equation}
\|E_K\|_\infty \le \left( \frac{d_{max}}{d_{max}+1} \right)^{K+1}
\end{equation}
Because $\frac{d_{max}}{d_{max}+1} < 1$, the error matrix $E_K$ approaches the zero matrix exponentially as $K \to \infty$. \qed

\section{Extended Topological Analysis}

\subsection{Empirical Verification of Approximation Error and Scalability}
\label{app:empirical_verification}

To further validate the theoretical bounds established in Theorem \ref{thm:trunc_error} and the computational advantages of our approach, we present an empirical verification of the Neumann series approximation in Figure \ref{fig:theory_verification}. The actual truncation error is strictly bounded below the theoretical limit, showing tight convergence on Erdős–Rényi (ER) graphs and dropping significantly faster than the worst-case bound on scale-free Barabási–Albert (BA) graphs. While the compensated error slightly exceeds the uncompensated error, this numerical discrepancy represents a necessary algebraic trade-off to restore absolute probability mass conservation and diagonal structural dominance. Furthermore, the wall-clock execution time analysis confirms that the $O(K|E|)$ SpMM-based propagation remains highly scalable, entirely circumventing the severe $O(n^3)$ polynomial explosion of exact matrix inversion as the graph size scales.

\begin{figure*}[htbp]
    \centering
    \includegraphics[width=\textwidth]{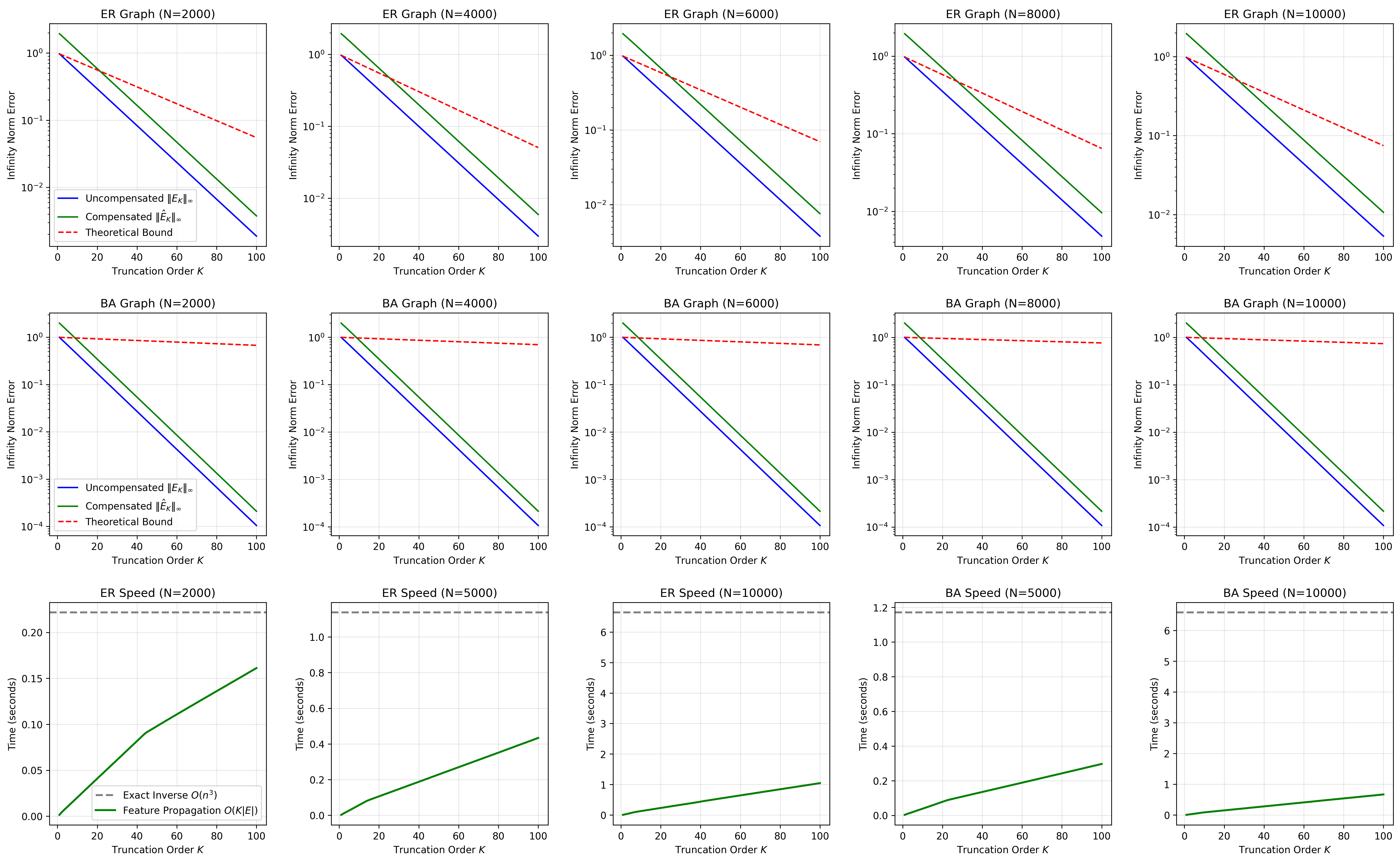} 
    \caption{Empirical verification of the Neumann series approximation. \textbf{Top \& Middle Rows:} Infinity norm truncation errors (uncompensated and compensated) compared against the theoretical upper bound (Theorem \ref{thm:trunc_error}) on ER and BA graphs. \textbf{Bottom Row:} Wall-clock execution time of the exact $O(n^3)$ dense matrix inversion versus the $O(K|E|)$ sparse feature propagation.}
    \label{fig:theory_verification}
\end{figure*}

\subsection{Rank Preservation Analysis}
\begin{figure*}[htbp]
    \centering
    \includegraphics[width=\textwidth]{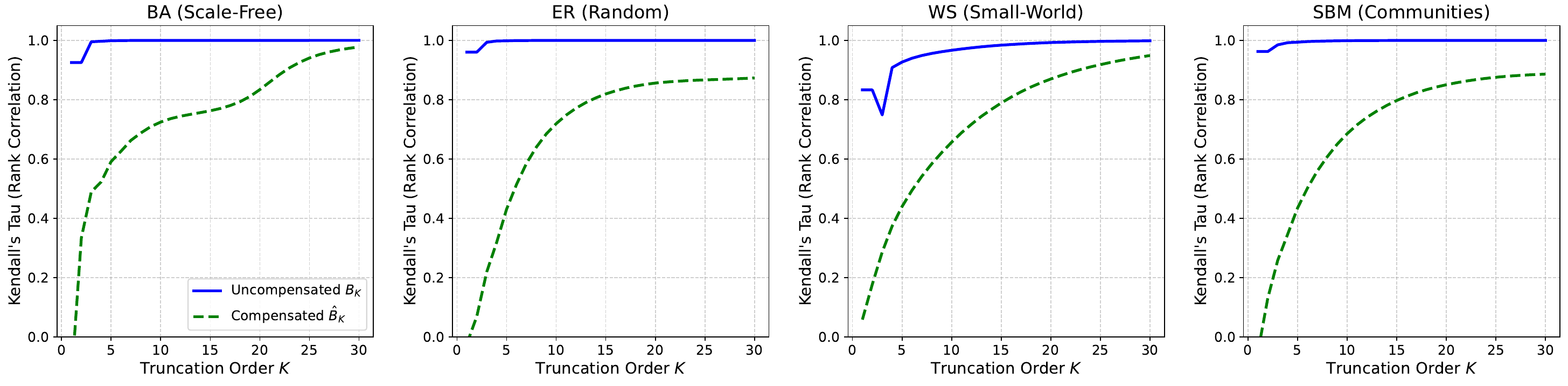}
    \caption{Node centrality rank preservation analysis using Kendall’s Tau ($\tau$) rank correlation between approximated and exact DSM diagonal entries across four synthetic topologies ($N=1000$). The blue solid line represents the uncompensated approximation ($\text{diag}(B_K)$), while the green dashed line represents the compensated approximation ($\text{diag}(\hat{B}_K)$). Higher values indicate better preservation of the relative ranking. The uncompensated $B_K$ achieves near-perfect rank preservation at extremely low truncation orders ($K<5$), whereas the compensated $\hat{B}_K$ exhibits slower convergence due to residual mass insertion, highlighting a fundamental algebraic trade-off between numerical row-stochasticity and short-range structural fidelity.}
    \label{fig:rank_preservation}
\end{figure*}
As depicted in Figure \ref{fig:rank_preservation}, we track the rank correlation of diagonal entries generated by the Neumann series approaches versus the exact inversion. The analysis empirically confirms the algebraic trade-off imposed by residual mass compensation between maintaining the strict structural ranking of vertices and conserving the total diffusion probability. 

\subsection{Distance Decay Analysis}
To empirically validate the structural implications of the truncated Neumann series, we analyze the distance decay properties of the exact and approximated doubly stochastic graph matrices. Figure \ref{fig:distance_decay} plots the mean entry value $\bar{b}_{ij}$ as a function of the shortest path distance $d(i,j)$ across four distinct graph topologies.

\begin{figure*}[htbp]
    \centering
    \includegraphics[width=\textwidth]{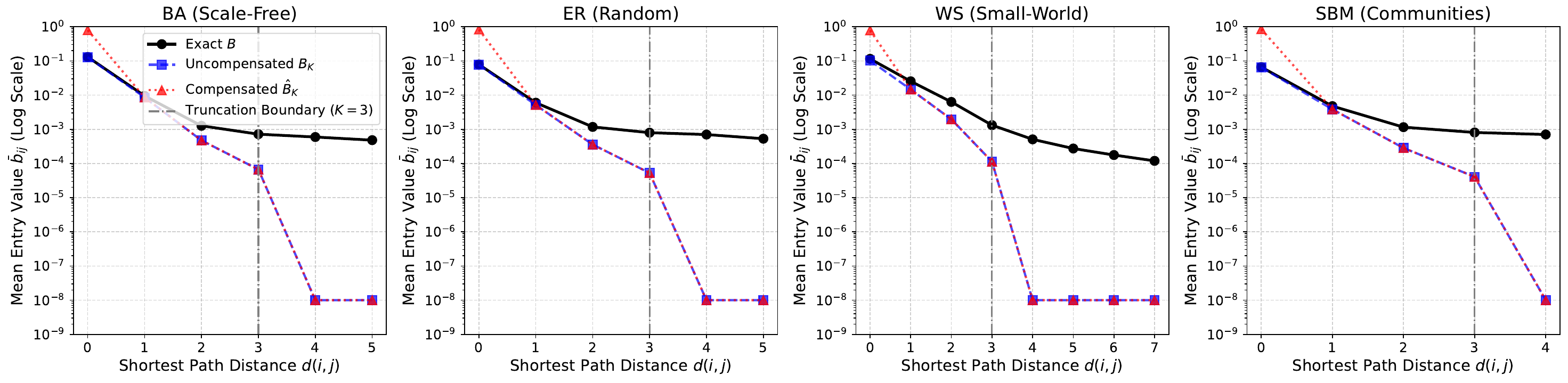}
    \caption{Distance decay of the exact doubly stochastic graph matrix $B$ versus the uncompensated ($B_K$) and compensated ($\hat{B}_K$) truncated Neumann series at order $K=3$. The approximations perfectly match the exact values within the local receptive field ($d \le 3$) and strictly enforce a hard cutoff for distances $d \ge 4$. Residual mass compensation operates exclusively on the self-loop entries ($d=0$).}
    \label{fig:distance_decay}
\end{figure*}

The exact doubly stochastic matrix $B$ exhibits a global exponential decay, confirming that spatial proximity directly dictates structural influence. By truncating the Neumann series at $K=3$, both the uncompensated $B_K$ and compensated $\hat{B}_K$ highly approximate the exact values within the $K$-hop neighborhood ($d \le 3$). Critically, for distances $d(i,j) > K$, the entry values strictly drop to the numerical floor. This phenomenon empirically demonstrates the enforcement of a hard localized receptive field, which prevents the unbounded integration of distant topological noise and intrinsically limits the over-smoothing problem. Furthermore, the results validate the orthogonality of the Residual Mass Compensation mechanism. The compensated matrix $\hat{B}_K$ strictly diverges from the uncompensated matrix $B_K$ only at $d=0$ (the self-loop entries), where the residual mass successfully restores the local centrality indicator. For all distances $d(i,j) \ge 1$, the two approximations remain perfectly identical, proving that the compensation ensures exact row-stochasticity without violating the strict boundaries of the finite receptive field.

\subsection{Dirichlet Energy Decay and Over-smoothing Mitigation}
\label{subsec:dirichlet_energy_analysis}

\begin{figure*}[htbp]
    \centering
    \includegraphics[width=\textwidth]{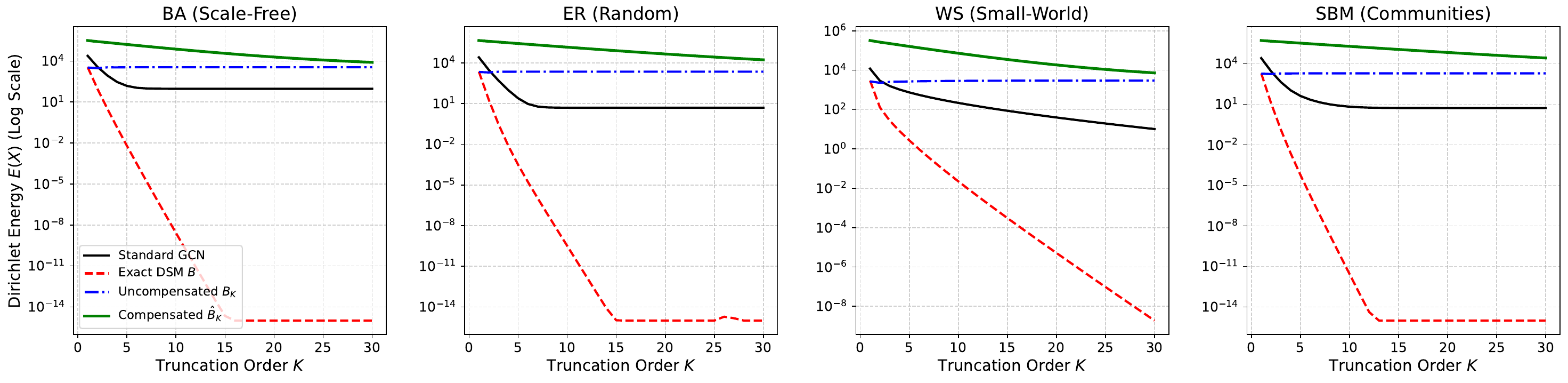}
    \caption{Dirichlet energy decay trajectories across four distinct graph topologies (BA, ER, WS, and SBM, $N=1000$). Starting from identical random Gaussian features, the exact doubly stochastic matrix $B$ (red dashed) forces an exponential collapse to numerical zero, indicating absolute over-smoothing. The uncompensated truncated series $B_K$ (blue dash-dotted) plateaus prematurely due to severe probability mass leakage, halting effective structural aggregation. The standard GCN (black solid) plateaus at a low non-zero energy bound. In contrast, our compensated operator $\hat{B}_K$ (green solid) consistently maintains the highest stationary energy lower bound across all topologies, empirically verifying that the Residual Mass Compensation mechanism successfully anchors high-frequency structural signals and averts catastrophic over-smoothing.}
    \label{fig:dirichlet_energy_4_graphs}
\end{figure*}

To rigorously evaluate the model's resilience against over-smoothing, we track the evolution of the Dirichlet energy $E(X) = \frac{1}{2} \text{Tr}(X^T L X)$ across four distinct graph topologies: Scale-Free (BA), Random (ER), Small-World (WS), and Community-Structured (SBM). The Dirichlet energy serves as a standard mathematical proxy for feature smoothness; a continuous decay towards zero implies that node representations are becoming indistinguishable. Figure~\ref{fig:dirichlet_energy_4_graphs} illustrates the energy trajectories starting from identical random Gaussian noise $X \sim \mathcal{N}(0, 1)$. The empirical results reveal consistent algebraic behaviors regardless of the underlying topology. The standard GCN operator (black solid line) exhibits continuous energy depletion, eventually plateauing at a very low bound. Conversely, recursive application of the exact doubly stochastic matrix $B$ (red dashed line) forces the energy to collapse exponentially to numerical zero ($<10^{-14}$), representing absolute homogenization. Furthermore, the uncompensated truncated operator $B_K$ (blue dash-dotted line) exhibits a severe algebraic collapse; its energy curve flattens almost immediately, indicating that the operator ceases to aggregate meaningful structural information beyond the local neighborhood due to probability mass leakage. In stark contrast, our compensated operator $\hat{B}_K$ (green solid line) completely mitigates these issues via the Residual Mass Compensation mechanism. By analytically re-injecting the truncated probability mass back into the ego-node, $\hat{B}_K$ guarantees a strictly high, non-zero stationary energy bound. This behavior remains exceptionally robust across all tested graph structures. The mathematically enforced local anchor strictly prevents the dilution of high-frequency structural signals, thereby allowing the architecture to capture structural dependencies without triggering catastrophic over-smoothing.

\section{Heterophilic Graphs Performance Analysis}
\label{app:heterophilic}

\begin{table*}[htbp]
\centering
\caption{Node classification accuracy (percentage) on heterophilic datasets under \textbf{Full Supervised (FULL)} setting. Standard deviations are also scaled. Legend: \colorbox{first}{\scriptsize 1st}, \colorbox{second}{\scriptsize 2nd}, \colorbox{third}{\scriptsize 3rd}.}
\label{tab:heterophilic_full}
\resizebox{\textwidth}{!}{
\begin{tabular}{lccccc}
\toprule
\textbf{Model} & \textbf{Chameleon} & \textbf{Squirrel} & \textbf{Texas} & \textbf{Wisconsin} & \textbf{Cornell} \\
\midrule
GCN       & 57.7 ± 1.6 & \cellcolor{third}35.4 ± 0.2 & 63.7 ± 0.5 & 62.0 ± 0.5 & 48.9 ± 0.8 \\
GAT       & \cellcolor{third}57.8 ± 0.2 & 32.4 ± 0.5 & 59.6 ± 0.4 & 60.0 ± 1.0 & 44.1 ± 1.2 \\
SAGE      & \cellcolor{second}59.7 ± 1.1 & \cellcolor{second}37.2 ± 0.1 & \cellcolor{first}82.3 ± 0.9 & \cellcolor{first}84.3 ± 0.4 & \cellcolor{first}72.3 ± 0.8 \\
CHEB      & 43.6 ± 0.5 & 30.8 ± 0.3 & \cellcolor{second}74.3 ± 0.5 & \cellcolor{second}77.3 ± 0.4 & \cellcolor{second}68.0 ± 0.7 \\
APPNP     & 48.5 ± 0.1 & 23.5 ± 0.3 & 61.1 ± 0.7 & 61.6 ± 0.5 & 41.9 ± 1.8 \\
GCNII     & 40.7 ± 0.6 & 29.1 ± 0.3 & 59.0 ± 0.7 & 55.9 ± 0.9 & 42.0 ± 0.4 \\
JKNET     & \cellcolor{first}59.9 ± 0.4 & \cellcolor{first}42.5 ± 0.5 & 59.0 ± 1.2 & 56.1 ± 0.6 & 42.1 ± 1.2 \\
\midrule
DSM       & 47.8 ± 0.3 & 25.4 ± 0.4 & 62.7 ± 0.8 & 64.9 ± 1.1 & 52.7 ± 1.7 \\
DSM\_COMP & 51.1 ± 0.3 & 31.4 ± 0.2 & \cellcolor{third}65.1 ± 1.1 & \cellcolor{third}68.3 ± 0.5 & \cellcolor{third}53.4 ± 2.0 \\
\bottomrule
\end{tabular}
}
\end{table*}

\begin{table*}[htbp]
\centering
\caption{Node classification accuracy (percentage) on heterophilic datasets under \textbf{Semi-Supervised (SEMI)} setting. Standard deviations are also scaled. Legend: \colorbox{first}{\scriptsize 1st}, \colorbox{second}{\scriptsize 2nd}, \colorbox{third}{\scriptsize 3rd}.}
\label{tab:heterophilic_semi}
\resizebox{\textwidth}{!}{
\begin{tabular}{lccccc}
\toprule
\textbf{Model} & \textbf{Chameleon} & \textbf{Squirrel} & \textbf{Texas} & \textbf{Wisconsin} & \textbf{Cornell} \\
\midrule
GCN       & \cellcolor{first}41.2 ± 4.5 & \cellcolor{third}24.3 ± 3.6 & 63.8 ± 4.7 & \cellcolor{third}52.8 ± 6.8 & 47.0 ± 4.0 \\
GAT       & \cellcolor{second}40.3 ± 4.5 & 23.1 ± 2.1 & 65.4 ± 5.5 & 48.4 ± 7.3 & 45.4 ± 7.5 \\
SAGE      & \cellcolor{third}40.2 ± 2.8 & 23.7 ± 1.3 & \cellcolor{first}80.0 ± 8.1 & \cellcolor{first}71.6 ± 5.7 & \cellcolor{first}66.5 ± 4.4 \\
CHEB      & 30.5 ± 3.2 & 23.6 ± 2.0 & \cellcolor{second}67.6 ± 5.7 & \cellcolor{second}62.0 ± 10.9 & \cellcolor{second}54.6 ± 6.7 \\
APPNP     & 31.8 ± 4.2 & 19.9 ± 0.9 & 31.9 ± 20.2 & 48.4 ± 7.7 & 47.6 ± 11.2 \\
GCNII     & 27.9 ± 3.0 & \cellcolor{second}25.0 ± 1.6 & 40.5 ± 24.5 & 46.0 ± 9.1 & 35.7 ± 13.4 \\
JKNET     & 34.9 ± 3.6 & \cellcolor{first}26.8 ± 1.7 & 48.7 ± 19.4 & 48.4 ± 9.9 & 42.2 ± 11.5 \\
\midrule
DSM       & 32.7 ± 4.9 & 22.0 ± 1.3 & \cellcolor{third}66.5 ± 7.6 & 30.4 ± 18.7 & 31.9 ± 18.7 \\
DSM\_COMP & 31.8 ± 5.1 & 23.9 ± 2.7 & 65.9 ± 8.5 & 39.2 ± 23.2 & \cellcolor{third}51.9 ± 4.7 \\
\bottomrule
\end{tabular}
}
\end{table*}

\section{Integration with Graph Transformers}
\label{app:graph_transformer_integration}

To further evaluate the versatility of the Doubly Stochastic Graph Matrix (DSM), we integrate it into generalized Graph Transformer architectures, specifically substituting their native structural and positional encodings. 

Traditional Graph Transformers rely on heuristic or discrete metrics to inject graph topology into the self-attention mechanism. For instance, Graphormer utilizes discrete shortest-path distances for spatial encoding and degree metrics for node centrality. Similarly, the standard Graph Transformer (GT) employs Laplacian eigenvectors to establish absolute positional encodings. While effective, these methods often struggle to capture continuous diffusion dynamics and are strictly bounded by discrete spatial steps or spectral approximations.

In our implementation, we systematically replace these mechanisms with the DSM. We extract the diagonal of the structural matrix $B$ (or its truncated counterparts) to serve as a continuous, diffusion-based \textit{Centrality Encoding}, directly reflecting a node's global or $K$-hop structural dominance. Concurrently, the dense off-diagonal elements act as \textit{Spatial Encodings} (or edge attributes), feeding precise, topology-aware transition probabilities into the attention bias. 

\begin{table*}[htbp]
\centering
\caption{Node classification accuracy (percentage) using Graph Transformer architectures. Results compare original baseline encodings against Exact, Truncated, and Compensated Truncated DSM encodings. Standard deviations are scaled accordingly. Legend: \colorbox{first}{\scriptsize 1st}, \colorbox{second}{\scriptsize 2nd}, \colorbox{third}{\scriptsize 3rd} per architecture.}
\label{tab:transformer_results}
\resizebox{\textwidth}{!}{
\begin{tabular}{llccc}
\toprule
\textbf{Base Architecture} & \textbf{Encoding Mode} & \textbf{Cora} & \textbf{CiteSeer} & \textbf{Photo} \\
\midrule
\multirow{4}{*}{\textbf{Graph Transformer (GT)}} 
& Original (Laplacian PE) & \cellcolor{third}82.9 ± 2.6 & \cellcolor{second}68.6 ± 1.7 & \cellcolor{third}94.4 ± 0.4 \\
& Exact DSM               & \cellcolor{second}83.2 ± 1.9 & 65.0 ± 4.9 & \cellcolor{first}94.7 ± 0.6 \\
& Truncated DSM           & 82.1 ± 2.6 & \cellcolor{third}65.5 ± 3.5 & 94.6 ± 0.8 \\
& Comp. Trunc. DSM        & \cellcolor{first}85.1 ± 1.0 & \cellcolor{first}69.7 ± 1.9 & \cellcolor{second}94.5 ± 0.5 \\
\midrule
\multirow{4}{*}{\textbf{Graphormer}}              
& Original (Degree + SPD) & 53.8 ± 7.3 & 52.8 ± 4.9 & \cellcolor{second}82.7 ± 3.6 \\
& Exact DSM               & \cellcolor{first}60.3 ± 1.6 & \cellcolor{third}57.9 ± 2.5 & \cellcolor{third}82.3 ± 2.7 \\
& Truncated DSM           & \cellcolor{second}58.5 ± 6.9 & \cellcolor{second}61.3 ± 4.9 & 85.3 ± 1.3 \\
& Comp. Trunc. DSM        & \cellcolor{third}57.8 ± 7.5 & \cellcolor{first}61.7 ± 4.7 & \cellcolor{first}84.9 ± 1.0 \\
\bottomrule
\end{tabular}
}
\end{table*}

This substitution provides several theoretical advantages for the self-attention mechanism:
\begin{itemize}
    \item \textbf{Continuous Proximity Measure:} Unlike discrete shortest-path distances, the DSM provides a continuous relative proximity score based on random walk probabilities. It naturally decays with distance while accounting for multiple parallel paths between node pairs, offering the attention mechanism a richer measure of topological relevance.
    \item \textbf{Robust Centrality:} The diagonal entries of the DSM measure the return probability of a random walk, providing a stricter and more robust centrality prior than simple degree counting, allowing the attention heads to accurately weigh structurally critical nodes.
    \item \textbf{Bounded Receptive Field with Mass Conservation:} By utilizing the Compensated Truncated DSM ($\hat{B}_K$), we enforce a hard $K$-hop spatial cutoff in the attention matrix while preserving topological mass. This explicitly prevents the self-attention mechanism from integrating unbounded global noise, thereby mitigating over-smoothing in deep Transformer layers.
\end{itemize}

Table \ref{tab:transformer_results} summarizes the node classification performance of the Graph Transformer (GT) and Graphormer when utilizing their original encodings versus the exact, truncated, and compensated truncated DSM matrices. The empirical results demonstrate that integrating the compensated truncated DSM generally yields superior or highly competitive performance across datasets, confirming that diffusion-based structural encodings enhance the representational capacity of self-attention on graphs.

\end{document}